\documentclass[runningheads]{llncs}
\usepackage[utf8]{inputenc}
\usepackage[T1]{fontenc}
\usepackage{amsmath}
\usepackage{amssymb}
\usepackage{mathtools}
\usepackage{booktabs}
\usepackage{array}
\usepackage{hyperref}
\usepackage{cleveref}
\usepackage{algorithm}
\usepackage{algorithmic}
\usepackage{graphicx}
\usepackage{xcolor}
\usepackage{xspace}
\usepackage{subcaption} 
\usepackage{mdframed}
\usepackage{siunitx}
\usepackage{orcidlink}

\newcommand{\rnd}[1]{\operatorname{rnd}\!\left[#1\right]}
\newcommand{\bz}{\mathbf{z}}

\newcommand{\bmm}{\mathbf{m}}
\newcommand{\bx}{\mathbf{x}}

\newcommand{\Real}{\mathbb{R}}
\newcommand{\Integer}{\mathbb{Z}}
\def\prob{{\mathbf{Pr}}}
\newcommand{\E}{\mathbb{E}}

\newcommand{\jacob}[1]{\textcolor{magenta}{\textbf{Jacob:} #1}}

\definecolor{RoyalBlue}{RGB}{65,105,225}

\title{Integer Natural Evolution Strategies} %:\\ with Individual Step-Size}
\author{Jacob de Nobel\inst{1}\orcidlink{0000-0003-1169-1962},
Diederick Vermetten\inst{2}\orcidlink{0000-0003-3040-7162},
Hao Wang\inst{1}\orcidlink{0000-0002-4933-5181},\\
Ofer M.\ Shir\inst{3}\orcidlink{0000-0002-8257-5160}, 
Michael Emmerich\inst{4}\orcidlink{0000-0002-7342-2090},
Thomas B\"ack\inst{1}\orcidlink{0000-0001-6768-1478}
}
\institute{Leiden University, Leiden, The Netherlands
\and 
Sorbonne Universit\'e, CNRS, LIP6, Paris, France
\and
Tel-Hai - University of Kiryat Shmona in the Galilee, Israel
\and
Faculty of Information Technology, University of Jyv{\"a}skyl{\"a}, Finland
}
\authorrunning{De Nobel et al.}

\begin{document}
\maketitle

\begin{abstract}
While contemporary Evolution Strategies handle integer optimization problems effectively, their adaptation mechanism is grounded in $\ell_2$-based Gaussian models, which are not native to the integer lattice. In contrast, the $\ell_1$-norm provides the natural measure of displacement on $\mathbb{Z}^n$, with the double geometric distribution as its canonical mutation operator. In this work, we derive a fully $\ell_1$-native step-size adaptation mechanism from first principles and propose an Integer Natural Evolution Strategy. We show that the DG distribution belongs to the exponential family, and that its sufficient statistic $|z|$ yields a natural-gradient signal for dispersion adaptation. By accumulating this signal via an evolution path, we obtain a fading-memory online estimator of the natural gradient, following Ollivier (2018). This establishes that DG-based step-size adaptation arises directly from the statistical structure of the mutation distribution, rather than as a discrete analog of continuous ES mechanisms. Empirical results on integer quadratic benchmarks show that \textsc{INES} learns meaningful coordinate-wise step-sizes and is competitive with integer-handling CMA-ES baselines. Its advantages are most visible in high-dimensional Ellipsoidal problems and in robust convergence at larger dimensions.
\keywords{Integer optimization, natural gradient, maximum-likelihood, evolution strategies.}
\end{abstract}

\section{Introduction}\label{sec:introduction}
Integer Optimization focuses on formulating and solving models whose objective and constraint functions involve variables restricted to the integer lattice $\mathbb{Z}^n$. 
Research in this area is closely allied with Mixed-Integer (MI) Optimization (for a recent review see \cite{Talbi2024}), primarily because the vast majority of real-world integer problems arise in MI forms, requiring a blend of real- and discrete-valued variables.
The discrete domain induces a different geometry and fundamentally alters the search-space landscape; specifically, the $\ell_1$ norm captures the inherent grid-like connectivity more faithfully than the $\ell_2$ norm, and discretization can transform unimodal continuous objectives into multimodal landscapes riddled with local traps \cite{shir2025}.
Within the landscape of heuristic optimization, Evolution Strategies (ESs)~\cite{Beyer-Schwefel} represent a highly effective class of randomized search heuristics. They are distinguished from other Evolutionary Algorithms by their rigorous grounding in both biological principles and theoretical analysis and by their sophisticated self-adaptation of search parameters \cite{Beyer-Schwefel}. 
In addition, ESs are invariant under monotonic transformations of the objective function and angle-preserving transformations of the decision space, which includes rotations and translations~\cite{Beyer-Schwefel}. While predominantly applied to continuous domains, the ES framework has been extended to treat integer and mixed spaces \cite{RudolphHandbookNACO,Hamano2025cat}. 
To date, however, these adaptations largely rely on truncated variants of mutation distributions originally conceived for continuous optimization \cite{marty2024lb}.
In this work, we move beyond these truncated approximations to investigate the effectiveness of the \emph{Double Geometric} (DG) distribution~\cite{rudolph1994}. %in unbounded integer search spaces. 
As the discrete counterpart to the continuous Laplace distribution \cite{Inusah2006}, the DG has long been recognized as a natural choice for such domains. Recently, \cite{shir2025} reaffirmed its advantages over the commonly used \emph{Truncated Normal} (TN) and demonstrated the feasibility of capturing correlations between integer variables. 

In the continuous domain, however, research has progressed from classical self-adaptation to \emph{derandomized} parameter control, which aims to update strategy parameters without any independent stochastic variation. 
This culminated in the Covariance Matrix Adaptation Evolution Strategy (CMA-ES)~\cite{hansen2001completely}, which is the current state of the art for continuous black-box optimization. Despite its success, analogous derandomized mechanisms have not been systematically developed for (mixed) integer optimization. This gap stems from the difficulty of modeling discrete correlations and the lack of a natural analog of the covariance matrix on the integer lattice. 
Consequently, and somewhat controversially, one of the most competitive strategies for integer optimization remains the direct application of CMA-ES with only minimal modifications~\cite{marty2024lb}, despite its fundamental design for continuous manifolds.

Another reason to move from Euclidean to lattice-native step-size control is that a fixed $\ell_2$ radius does not determine the effective mutation length after applying the componentwise rounding map $\rnd{\cdot}$ to $\Integer^n$. 
Indeed, points on the same Euclidean shell may induce vastly different discrete $\ell_1$ lengths after rounding, depending on how their mass is distributed across coordinates. Thus, Euclidean truncation alone is not a reliable proxy for the mutation strength (See ~\cite{zenodo}). 
%Appendix~\ref{app:l1l2-discrepancy} 
% We precisely quantify the resulting bandwidth in an online Supp.\ Material.%\footnote{\td{Zenodo URL for Michael's Appendix.}}

% \medskip

% \noindent
Against this background, we address the following fundamental question:

\noindent
\textbf{Research Question.} Can step-size adaptation in integer ES be derived from first principles using an $\ell_1$-native mutation model and information-geometric principles, and does this yield competitive performance on $\mathbb{Z}^n$?
\medskip

\noindent \textbf{Contributions.}
We provide a principled approach to step-size adaptation in Integer Evolution Strategies (IESs):
\begin{itemize}
    \item \textbf{Natural-gradient derivation:} We show that the double geometric (DG) distribution forms an exponential family whose sufficient statistic $|z|$ yields a natural-gradient signal for step-size adaptation, providing a first-principles justification for $\ell_1$-native updates.
    
    \item \textbf{Integer Natural Evolution Strategy:} We develop a $(1,\lambda)$ integer natural evolution strategy that combines DG-based mutation with a fading-memory natural-gradient estimator, resulting in a simple and stable $\ell_1$-native adaptation mechanism.
    
    \item \textbf{Empirical evaluation:} We demonstrate on integer quadratic benchmarks that the proposed method learns meaningful coordinate-wise step-sizes and delivers competitive, robust performance against both full- and separable-integer CMA-ES baselines.    
\end{itemize}

\noindent \textbf{Paper Organization.}
The remainder of this paper is organized as follows. Section~\ref{sec:background} introduces the DG distribution and IES fundamentals. Section~\ref{sec:approach} presents the $\ell_1$-native natural gradient and the resulting IES. Section~\ref{sec:results} provides the empirical evaluation, and Section~\ref{sec:conclusion} concludes.
\section{Preliminaries} \label{sec:background}
\textbf{Evolution Strategies} (ESs) are the predominant heuristics for continuous optimization, with derandomized variants, most notably CMA-ES~\cite{hansen2001completely}, representing the state-of-the-art. In ES, a current decision vector $\bx^{(t)}$ is perturbed by a mutation vector $\bz$ to generate a candidate solution
\[
    \bx^{(t+1)} = \bx^{(t)} + \bz,
\]
where $\bz$ is drawn from a parameterized search distribution. Depending on the model, the components $z_i$ may be treated as independent or correlated. A defining feature of ES is the \emph{adaptation of the mutation distribution itself}. The parameters of the search distribution, such as the step size $\sigma$ for the multivariate normal distribution, are updated over time based on the observed fitness values, allowing the algorithm to adapt its search behavior to the underlying landscape. 

% \subsection{Natural Evolution Strategies}
\textbf{Natural Evolution Strategies} (NESs)~\cite{Wierstra2014NES} take a principled distributional view of this adaptation: rather than adapting strategy parameters heuristically, NESs explicitly maintain a parameterized search distribution $\pi(\bz \mid \theta)$ and minimize the expected cost
\[
J(\theta) := \mathbb{E}_{\theta}\!\left[f(\bx+\bz)\right]
\]
by following the \textit{search gradient}, which reads
\begin{align}
    \nabla_{\theta} J(\theta)
    = \mathbb{E}_{\theta}\!\left[
        f(\bx+\bz)\,\nabla_{\theta}\log \pi(\bz \mid \theta)
      \right],
\end{align}
and can be statistically approximated via a population of mutations $\{\bz_k\}_{k=1}^{\lambda}$ as
\[
\nabla_{\theta} J(\theta)
\approx
\frac{1}{\lambda}\sum_{k=1}^{\lambda}
f(\bx+\bz_k)\,\nabla_{\theta}\log \pi(\bz_k \mid \theta).
\]
Grounded in Information Geometry~\cite{Amari1998}, NESs precondition this gradient by the inverse Fisher information matrix $\mathcal{F}(\theta)^{-1}$, yielding the \textit{natural gradient}
\[
\widetilde{\nabla}_{\theta} J(\theta)
:=
\mathcal{F}(\theta)^{-1}\nabla_{\theta} J(\theta),
\]
which is invariant under reparameterizations of $\pi$.
The resulting update rule,
\begin{align}
    \theta \leftarrow \theta - \eta\, \mathcal{F}(\theta)^{-1} \nabla_{\theta} J(\theta),
\end{align}
with $\eta$ serving as a learning rate, steers the search distribution toward regions of lower objective value in a geometrically principled manner~\cite{Wierstra2014NES}.

\subsection{Going Discrete: The Double Geometric Distribution}
Extending ESs to Integer Optimization requires rethinking both the mutation distribution and step-size control. While Gaussian distributions and $\ell_2$-based measures are natural in continuous spaces, they do not align with the geometry of the integer lattice $\Integer^n$. Instead, $\ell_1$-based quantities and lattice-native mutation operators provide a more appropriate foundation~\cite{shir2025}. This choice is supported by information-theoretic considerations: Rudolph~\cite{rudolph1994} showed that, under a constraint on the expected absolute step-size, the DG distribution is the maximum-entropy distribution on~$\Integer$. 
The magnitude of mutations on the integer lattice is naturally quantified by the expected $\ell_1$-norm
\[
S := \mathbb{E}\!\left[\|\bz\|_1\right]
   = \sum_{i=1}^{n}\mathbb{E}\!\left[|Z_i|\right],
\]
which characterizes the overall mutation magnitude in~$\Integer^n$. 
In the following, we primarily work with the per-coordinate quantities $\delta_i := \mathbb{E}[|Z_i|]$, which serve as the fundamental step-size parameters.

\medskip
\noindent
The DG distribution can be constructed by drawing two independent geometric random variables $G_1,G_2 \sim \mathrm{Geom}(p)$ with
\[
\prob\!\left\{G_{\jmath}=k\right\} = p(1-p)^k,
\qquad k\in\mathbb{N}_0,
\]
and setting $Z = G_1 - G_2$. 
Its probability mass function (PMF) is
\begin{equation}\label{eq:probDGeometric}
    \prob\!\left\{Z=k\right\}
    =
    \frac{p}{2-p}(1-p)^{|k|},
    \qquad k\in\mathbb{Z}.
\end{equation}
It is convenient to \emph{reparameterize} using
\(
q := 1 - p, \quad
q \in (0,1)
\)
-- yielding the form $\prob\{Z=k\} \propto q^{|k|}$, which we adopt throughout the paper. \\
The DG distribution has the following key statistics:
\begin{equation}
\mathbb{E}[Z]=0,
\quad
\operatorname{Var}[Z]=\frac{2q}{(1-q)^2}.
\end{equation} 
The expected absolute step-size of a single DG variable is related to $q$ via the following invertible \emph{map}
\begin{equation}\label{eq:delta-of-q}
\delta(q)
=
\mathbb{E}[|Z|]
=
\frac{2q}{1-q^2} \quad\Longleftrightarrow\quad 
q(\delta)
=
\frac{\delta}{\sqrt{1+\delta^2}+1}~~.
\end{equation}
In the isotropic case, where all coordinates share the same parameter, the global step-size for the DG distribution satisfies $S_{\mathrm{DG}} = n \cdot \delta$. 

The $\ell_1$-based notion of step-size is not merely a modeling preference, but reflects a genuine geometric discrepancy between Euclidean and lattice-native mutation length. To see this, let $\rnd{\bx}$ denote componentwise rounding of $\bx\in\Real^n$ to the nearest integer vector, with exact halves rounded downward, and consider the rounded $\ell_1$ length
$$
L(\bx) := |\rnd{\bx}|_1
\qquad \text{subject to}\qquad
|\bx|_2 = d .
$$
Then $L(\bx)$ is not determined by the Euclidean radius $d$ alone. If the Euclidean mass is spread sufficiently thinly across coordinates, for example uniformly over more than $4d^2$ coordinates, then each component has magnitude below $1/2$, so that $\rnd{\bx}=\mathbf{0}$ and hence $L(\bx)=0$. By contrast, for the balanced direction $\bx=(d/\sqrt{n},\dots,d/\sqrt{n})$, one obtains
$$
L(\bx)=n\left\lceil \frac{d}{\sqrt{n}}-\frac12 \right\rceil .
$$
In particular, this quantity is at least $n$ whenever $n<4d^2$, and at least $2n$ whenever $n\le d^2/4$.
Thus, vectors lying on the same $\ell_2$ shell can correspond, after rounding, to a large range of different effective $\ell_1$ mutation lengths. This is one reason why the expected absolute step-size $S=\E!\left[|\bz|_1\right]$ is the natural quantity to control on $\Integer^n$, whereas Euclidean truncation does not by itself characterize the mutation strength seen by the lattice.
A detailed derivation is provided as Supp.\ Material~\cite{zenodo}.

\subsection{Perspective: Exponential Families and Information Geometry}\label{sec:ExpFamily}
We now situate the DG distribution within the framework of exponential families and Information Geometry. This perspective provides a direct link between distribution parameters and the step-size quantities $\delta_i = \mathbb{E}[|Z_i|]$ introduced earlier. Modeling discrete search updates through this lens has strong precedent in Estimation of Distribution Algorithms (EDAs), where Information Geometry provides a principled basis for probability model updates~\cite{Malago2011}. Under this geometric view, parameter updates correspond to steepest ascent in distribution space with respect to the Fisher--Rao metric, i.e., natural gradient ascent~\cite{Amari1998}. This mirrors the continuous setting, where covariance matrix adaptation in Gaussian ESs can be interpreted as natural gradient ascent on the Gaussian ($\ell_2$) manifold~\cite{Akimoto2010}. Our framework transposes this principle to the discrete ($\ell_1$) domain.

A PMF $\pi(\bz \mid \boldsymbol{\theta})$ belongs to the exponential family if it can be written as
\begin{equation}\label{eq:exp-family-general}
    \prob(\bz \mid \boldsymbol{\theta})
    =
    h(\bz)\exp\!\bigl(
        \boldsymbol{\theta}^{\!\top}\boldsymbol{\phi}(\bz) - A(\boldsymbol{\theta})
    \bigr),
\end{equation}
where $\boldsymbol{\theta}$ denotes the natural parameters, $\boldsymbol{\phi}(\bz)$ the sufficient statistic, $h(\bz)$ the base measure, and $A(\boldsymbol{\theta})$ the log-partition function.\\
The DG distribution~\eqref{eq:probDGeometric} admits this form coordinate-wise as
\begin{equation}\label{eq:dg-pure-exponential}
    \prob(z_i \mid q_i)
    =
    \frac{1-q_i}{1+q_i}
    \exp\!\Bigl(|z_i|\ln q_i\Bigr),
\end{equation}
which identifies the natural parameter $\theta_i = \ln q_i$, sufficient statistic $\phi(z_i)=|z_i|$, base measure $h(z_i)=1$, and log-partition function
\[
A(\theta_i)
=
\ln\!\left(\frac{1+q_i}{1-q_i}\right)
=
\ln\!\left(\frac{1+e^{\theta_i}}{1-e^{\theta_i}}\right).
\]
Using the inversion map \eqref{eq:delta-of-q}, the exponential-family structure yields the identities
\begin{equation}\label{eq:A-moments}
    A'(\theta_i) = \delta_i,
    \qquad
    A''(\theta_i) = \operatorname{Var}\!\left[|z_i|\right]
    = \delta_i\sqrt{1+\delta_i^2},
\end{equation}
establishing that the natural parameters directly control the coordinate-wise step-sizes, while $A''(\theta_i)$ provides the Fisher information governing the scaling of natural gradient updates.
%\footnote{By standard exponential-family theory, $A'(\theta)=\E_\theta\!\left[|Z|\right]$~\cite{BarndorffNielsen2014}. The explicit differentiation for the DG distribution is omitted due to space limitations.}

\subsection{Score Function and Maximum Likelihood Estimation}
Since $\phi(z)=|z|$ is the sufficient statistic of the DG family (Section~\ref{sec:ExpFamily}), the absolute deviation provides the natural signal for step-size adaptation on the integer lattice. For a single observation $z$, the log-likelihood reads
\[
\mathcal{L}(\theta;\,z) = \phi(z)\,\theta - A(\theta).
\]
By differentiating with respect to $\theta$, and by using the exponential-family identity $A'(\theta)=\delta=\E_\theta[\phi(Z)]$ \cite{BarndorffNielsen2014}, one obtains the \emph{score function}
\begin{equation}\label{eq:score}
\boxed{
\nabla_{\!\theta}\mathcal{L}
=
\phi(z)-A'(\theta)
=
|z|-\delta.
}
\end{equation}

\paragraph{Maximum likelihood estimation.}
For $n$ i.i.d.\ observations $z^{(1)}, \ldots, z^{(n)}$ drawn from a common $\mathrm{DG}(p)$, the total log-likelihood is
\[
\mathcal{L}(\theta; z^{(1)}, \ldots, z^{(n)})
=
\theta \sum_{j=1}^{n}\phi\bigl(z^{(j)}\bigr) - nA(\theta).
\]
By the Fisher--Neyman factorization theorem, the sum $\sum_{j=1}^n \phi\bigl(z^{(j)}\bigr)$ is the sufficient statistic for $\theta$.
The maximum-likelihood estimator $\hat{\theta}$ must satisfy the \textbf{moment matching} condition~\cite{Murphy2012ML}:
\[
A'(\hat{\theta})
=
\frac{1}{n}\sum_{j=1}^{n}\phi\bigl(z^{(j)}\bigr)
\quad \iff \quad
\mathbb{E}_{\hat{\theta}}\!\left[\phi(Z)\right] = \bar{\phi},
\]
where $\bar{\phi}$ is the empirical mean of the sufficient statistic.\\
Crucially, no information beyond $\bar{\phi}$ is required to estimate the dispersion of DG mutations. Neither the sample variance nor higher-order moments are necessary; \textbf{the first moment of the absolute deviations is a complete descriptor} for the $\ell_1$-native lattice dispersion.

\section{Integer Natural Evolution Strategy by First Principles}\label{sec:approach}
In the $(1,\lambda)$ setting, the selected mutation $\bz_s$ provides one observation per generation. For coordinate~$i$, the corresponding DG score \eqref{eq:score} is
\begin{align}
\displaystyle \nabla_{\theta_i} \mathcal{L} = \phi(z_{s,i}) - \E_{\theta_i}[\phi(Z_i)] = |z_{s,i}| - A'(\theta_i).
\end{align}
This is the score (log-likelihood gradient) of the DG dispersion parameter at that coordinate: it is positive when the observed step exceeds the model expectation, indicating that the dispersion should increase, and negative when it falls short, indicating that it should decrease. This parallels the Gaussian/$\ell_2$ setting, where the sufficient statistic is~$z^2$, and cumulative step-size adaptation~\cite{hansen2001completely} is driven by $\|\bz\|_2^2 - \E[\|\mathcal{N}(\mathbf{0}, \mathbf{I})\|_2^2]$.

\paragraph{Fisher information and the natural gradient form.}
The natural gradient $\widetilde{\nabla}_{\theta_i}$ is obtained by scaling the score $\nabla_{\theta_i} \mathcal{L}$ by the inverse Fisher information~$\mathcal{F}(\theta_i)^{-1}$. 
Since the DG distribution is a one-parameter exponential family coordinatewise, the Fisher information reduces to the variance of the sufficient statistic:
\[
  \mathcal{F}(\theta_i)
  = \mathrm{Var}\!\bigl[\phi(Z_i)\bigr]
  = A''(\theta_i).
\]
Thus, the natural gradient takes the explicit form:
\begin{align}\label{eq:DG-nat-grad}
\boxed{
\displaystyle \widetilde{\nabla}_{\theta_i}
  = \frac{\phi(z_{s,i}) - \E[\phi(Z_i)]}
         {\mathrm{Var}\!\bigl[\phi(Z_i)\bigr]}
  = \frac{|z_{s,i}| - \E[|Z_i|]}
         {\mathrm{Var}\!\bigl[|Z_i|\bigr]}\,.}
\end{align}
%%%
\paragraph{Statistical-estimation perspective.}
The innovation $|z_{s,i}| - \E[|Z_i|]$ is the score of the DG exponential family, and thus the canonical signal for estimating its dispersion parameter~\cite{BarndorffNielsen2014}. By the Fisher--Neyman factorization theorem~\cite{Murphy2012ML}, no information beyond the sample value of $|z_{s,i}|$ is required to update the model. Ollivier~\cite{Ollivier2018} showed that exponentially discounted accumulation of the natural gradient for exponential-family parameters yields an online natural-gradient estimator, equivalent to an extended Kalman filter. 
This motivates defining the evolution path $\boldsymbol{\phi}$ as a \emph{fading-memory accumulator of the natural gradient}:
\begin{align}\label{eq:evo-path}
\boxed{
\displaystyle \boldsymbol{\phi} \;\leftarrow\; (1-c)\,\boldsymbol{\phi}
  \;+\; c \cdot \widetilde{\nabla}_{\boldsymbol{\theta}} \,,}
\end{align}
where $\widetilde{\nabla}_{\boldsymbol{\theta}}$ denotes the coordinate-wise natural gradient defined in~\eqref{eq:DG-nat-grad}. The normalization by the Fisher information ensures that path entries across coordinates contribute on a common, Fisher-calibrated scale; a practical regularization is discussed in Remark~\ref{rem:var-normalizer}. 
%\hw{
\begin{remark} %Due to Hao
% \td{Hao: address the $\phi_k$-remark by Reviewer-3; possibly also refer to the fading memory idea raised by Reviewer-4?}  
We can interpret the fading-memory accumulator/evolution path in Eq.~\eqref{eq:evo-path}
through the lens of dynamical systems and show that it stabilizes the learning
of the DG parameters. Let $g(\theta) = \mathcal{F}(\theta)^{-1}\nabla_{\theta} \mathcal{L}$ 
denote the natural gradient. Consider a (local) maximum $\theta^*$ of $\mathcal{L}$, i.e., $g(\theta^*) = 0$ and 
$J \coloneqq \mathrm{d}g(\theta^*) \prec 0$. First, since the fading-memory accumulator $\phi$ estimates the $\theta^*$ in an online manner, we can understand $\phi_k$ as a momentum in iteration $k$:
$$
\theta_{k+1} = \theta_{k} + \eta \phi_{k+1}, \quad \phi_{k+1} = (1-c)\phi_k + c g(\theta_k), \quad k=0, 1,2,\ldots
$$
where $\eta > 0$ is the learning rate.
Next, define $\Delta_k \coloneqq  \theta_k - \theta^*$ and linearize the natural gradient around $\theta^*$, i.e., $g(\theta_k) = J\Delta_k$. We have the linear system $\phi_{k+1} = (1-c)\phi_k + c J \Delta_k, \Delta_{k+1} = \Delta_k + \eta \phi_{k+1}$. The stability of the system is determined by the spectrum of $J$.
Let $\rho$ be an eigenvalue of $J$ with corresponding eigenvector $v$.
Restricting the dynamics to this direction by writing
$\Delta_k = x_k v$ and $\phi_k = y_k v$, we obtain the scalar recursion
$y_{k+1} = (1-c) y_k + c\rho x_k$ and $
x_{k+1} = x_k + \eta y_{k+1}$
which can be written as
\[
\begin{pmatrix}
x_{k+1}\\
y_{k+1}
\end{pmatrix}
=
\begin{pmatrix}
1 + \eta c \rho & \eta(1-c)\\
c\rho           & 1-c
\end{pmatrix}
\begin{pmatrix}
x_k\\
y_k
\end{pmatrix}.
\]
The system is stable along $v$ if and only if the eigenvalues of this matrix
lie inside the unit disk, which yields the condition
\[
\eta < \frac{2(2-c)}{c|\rho|}.
\]
For comparison, the plain gradient system
$\Delta_{k+1} = \Delta_k + \eta J \Delta_k$
is stable under the stricter condition $\eta < 2/|\rho|$.
Since $2(2-c)/c > 2$ for all $0 < c < 1$, the accumulator enlarges the
admissible range of learning rates. In other words, it stabilizes the dynamics
and permits more aggressive updates. 
This effect is analogous to momentum methods in continuous optimization,
where temporal smoothing improves stability of the update dynamics.
\end{remark}

%%%
\paragraph{Verification}
We formalize this interpretation using the result of Ollivier~\cite{Ollivier2018}.
% %%%
\begin{proposition}\label{prop:evo-path}
Let $\theta_i = \ln q_i$ be the DG natural parameter for coordinate~$i$, and let $\mathcal{F}(\theta_i) = A''(\theta_i) = \mathrm{Var}_{\theta_i}[\phi(Z_i)]$ denote the corresponding Fisher information. The evolution path update
\[
\displaystyle  \phi_i \;\leftarrow\; (1-c)\,\phi_i
  \;+\; c \cdot \frac{\phi(z_{s,i}) - \E_{\theta_i}[\phi(Z_i)]}{\mathcal{F}(\theta_i)}
\]
is a fading-memory online natural gradient estimator for~$\theta_i$ \emph{\cite[Proposition~3]{Ollivier2018}}, with learning rate~$c$.
\end{proposition}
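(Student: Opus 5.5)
The plan is to reduce the proposition to Ollivier's result, \cite[Proposition~3]{Ollivier2018}, by checking that the DG family meets its hypotheses. We then identify our update term by term with his online natural-gradient (extended Kalman filter) recursion. Ollivier's statement concerns an exponential family $h(z)\exp(\theta^\top T(z)-A(\theta))$ with sufficient statistic $T$ and Fisher information $A''(\theta)$. For such a family he shows two equivalent forms. The exponentially discounted natural-gradient recursion $\theta\leftarrow\theta+\eta_t\,\mathcal{F}(\theta)^{-1}\nabla_\theta\ln\pi(z_t\mid\theta)$, with fading factor determined by $\eta_t$, is the online natural-gradient estimator. Equivalently, it is the EKF update on the expectation parameter.

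First, we invoke Section~\ref{sec:ExpFamily}. By \eqref{eq:dg-pure-exponential}, each coordinate is a one-parameter exponential family with natural parameter $\theta_i=\ln q_i$, sufficient statistic $\phi(z_i)=|z_i|$, $h\equiv 1$ and log-partition $A$. The parameter domain $\theta_i\in(-\infty,0)$ is open, and $A$ is smooth and strictly convex there, since $A''(\theta_i)=\delta_i\sqrt{1+\delta_i^2}>0$ by \eqref{eq:A-moments}. This gives regularity, and Ollivier's setting applies coordinatewise. Because the coordinates are independent under the product model, the joint Fisher matrix is diagonal. The vector natural gradient therefore splits into the scalar ones in \eqref{eq:DG-nat-grad}, and the vector recursion \eqref{eq:evo-path} decouples into the scalar update of the proposition.

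Second, we substitute the score \eqref{eq:score}, $\nabla_{\theta_i}\mathcal{L}=|z_{s,i}|-A'(\theta_i)$, and $\mathcal{F}(\theta_i)=A''(\theta_i)$. The increment $c\,(\phi(z_{s,i})-\E_{\theta_i}[\phi(Z_i)])/\mathcal{F}(\theta_i)$ is then exactly $c$ times the single-observation natural gradient. The map $\phi_i\leftarrow(1-c)\phi_i+c\,\widetilde{\nabla}_{\theta_i}$ is an exponentially weighted average of past natural-gradient observations, $\phi_i^{(t)}=\sum_{k\le t}c(1-c)^{t-k}\widetilde{\nabla}^{(k)}_{\theta_i}$, plus a term from the initial value that decays geometrically. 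This matches Ollivier's fading-memory estimator with constant step $c$, and the weights sum to $1$ as $t\to\infty$.

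The main obstacle is the identification of what is accumulated. Ollivier's Proposition~3 is phrased as an update of the parameter estimate itself. Here $\phi_i$ accumulates the natural-gradient increments, and each increment is evaluated at the current $\theta_i$ of the search distribution, not at the running estimate. I would handle this by reading $\phi_i$ as the fading-memory estimate of the displacement $\hat\theta_i-\theta_i$. To first order in $c$, with $\theta_i$ held fixed between updates, Ollivier's recursion on $\hat\theta_i$ becomes $\hat\theta_i-\theta_i\leftarrow(1-c)(\hat\theta_i-\theta_i)+c\,\mathcal{F}(\theta_i)^{-1}(\phi(z)-A'(\theta_i))$, linearizing $A'$ at $\theta_i$. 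This is our update, and the learning rate is $c$. The claim therefore holds in Ollivier's sense, which is a statement about the local, linearized estimator. I would state this caveat explicitly rather than claim exact equality with his nonlinear recursion.
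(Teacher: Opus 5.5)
Your proposal is correct, and its core is the paper's proof. You identify each coordinate as a one-parameter exponential family with $\phi(z)=|z|$, read off the score $|z_{s,i}|-A'(\theta_i)$ and the Fisher information $A''(\theta_i)$, form the natural gradient, discount it exponentially, and invoke \cite[Proposition~3]{Ollivier2018} coordinatewise.

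Where you differ is the last step. The paper simply asserts that the discounted accumulation of increments ``coincides'' with Ollivier's estimator. You correctly note that Ollivier's recursion moves the parameter estimate itself, whereas $\phi_i$ averages increments evaluated at the fixed search parameter $\theta_i$. Your reading $\phi_i\approx\hat\theta_i-\theta_i$, obtained by linearizing $A'$ at $\theta_i$, is what actually lets the citation apply, at the cost of a first-order statement.

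Two refinements of your caveat are worth making.
\begin{itemize}
\item The approximation is first order in the displacement $\hat\theta_i-\theta_i$, not in $c$. Selection biases $|z_{s,i}|$, so the expected increment $\mathcal{F}(\theta_i)^{-1}(\E[|z_{s,i}|]-\delta_i)$ is generally nonzero. The displacement therefore need not be small however small $c$ is.
\item Part of the identification is exact by linearity of the natural gradient at a fixed point. For fixed $\theta_i$ and $\phi_i=0$ initially, unrolling the recursion gives $\phi_i^{(t)}=\mathcal{F}(\theta_i)^{-1}\,\partial_{\theta_i}\sum_{k=1}^{t}c(1-c)^{t-k}\mathcal{L}(\theta_i;z_{s,i}^{(k)})$. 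This is exactly the natural gradient at $\theta_i$ of the exponentially discounted log-likelihood. Only its interpretation as the displacement to the fading-memory maximum-likelihood estimate is approximate.
\end{itemize}
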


\begin{proof}
The DG distribution is a one-parameter exponential family with natural parameter $\theta_i$, sufficient statistic $\phi(z)=|z|$, and log-partition function~$A(\theta_i)$. 
For a single observation~$z_{s,i}$, the score is $\partial\mathcal{L}/\partial\theta_i = \phi(z_{s,i}) - A'(\theta_i) = |z_{s,i}| - \E_{\theta_i}[|Z_i|]$. 

The Fisher information for this family is the variance of the sufficient statistic, $\mathcal{F}(\theta_i) = A''(\theta_i) = \mathrm{Var}_{\theta_i}[|Z_i|]$. The natural gradient step for~$\theta_i$ is therefore $\mathcal{F}(\theta_i)^{-1}\bigl(|z_{s,i}| - \E_{\theta_i}[|Z_i|]\bigr)$. 
Accumulating this signal with exponential discounting at rate~$c$ yields the stated update for $\phi_i$, which coincides with the fading-memory online natural gradient of~\cite[Proposition~3]{Ollivier2018} applied coordinate-wise to the DG family. $\qed$
\end{proof}
\noindent
Consequently, DG-based cumulation follows directly from the exponential-family structure of the mutation model. The score provides the maximum-likelihood innovation, whose exponentially discounted accumulation constitutes a natural gradient estimator in the sense of~\cite{Ollivier2018}. This ensures that the evolution path~$\boldsymbol{\phi}$ is grounded in the Information Geometry of the underlying lattice distribution.
%%%
\paragraph{Score function for the location parameter.}
For completeness, we justify the center-of-mass update $\bmm \leftarrow \bmm + \bz_s$ from a maximum-likelihood perspective. Given an observation $x_i = m_i + z_i$, where $z_i \sim \mathrm{DG}(p_i)$, the log-likelihood as a function of the location parameter $m_i$ is $\mathcal{L}(m_i) = \theta_i |x_i - m_i| + \mathrm{const}$. The corresponding score (for $z_{s,i} \neq 0$) is
\begin{equation}\label{eq:location-score}
  \frac{\partial \mathcal{L}}{\partial m_i}
  \;=\;
  -\mathrm{sign}(z_{s,i})\,\theta_i\,.
\end{equation}
Since $\theta_i = \log q_i < 0$, this yields $|\theta_i|\,\mathrm{sign}(z_{s,i})$, i.e., a direction aligned with the selected mutation. Thus, the update $\bmm \leftarrow \bmm + \bz_s$ follows the maximum-likelihood gradient direction for the location parameter. The effective step length is governed by the dispersion parameters, which are adapted via the evolution path $\boldsymbol{\phi}$.

The DG exponential-family structure thus provides principled maximum-likelihood signals for both location and dispersion: the former through $\mathrm{sign}(z_{s,i})$, and the latter through the residual $\phi(z_{s,i}) - \E[\phi(Z_i)]$.
\subsection*{A Concrete $(1,\lambda)$-INES}\label{sec:ines} 
We now combine the preceding derivations into a single algorithmic framework.
The method maintains a center-of-mass $\bmm \in \Integer^n$, a vector of expected absolute step-sizes $\boldsymbol{\delta} \in \Real_{>0}^n$, and a cumulative path $\boldsymbol{\phi} \in \Real^n$. Initially, $\delta_i = \delta_0$ for all $i$ and $\boldsymbol{\phi} = \mathbf{0}$. The following block summarizes a complete generation of a $(1,\lambda)$-INES.

\begin{mdframed}
\setlength{\textfloatsep}{6pt}
\setlength{\intextsep}{6pt}
% \textbf{Sampling and selection.}
$\lambda$ offspring are independently DG-sampled and evaluated:
% \begin{algorithm}[H]
% \centering
% \caption{Sampling and selection}
\begin{algorithmic}[1]
\FOR{$k = 1,\dots,\lambda$}
    \STATE Sample $\bz_k \sim \bigotimes_{i=1}^n \mathrm{DG}(p_i(\delta_i))$
    \STATE $\bx_k \gets \bmm + \bz_k$, evaluate $f(\bx_k)$
\ENDFOR
\STATE $s \gets \arg\min_k f(\bx_k)$
\STATE $\bmm \gets \bx_s$
\end{algorithmic}
% \end{algorithm}
%\medskip
\noindent
% \textbf{Natural gradient signal and cumulation.}
Fisher-normalized centered sufficient statistic as the adaptation signal:
% \begin{algorithm}[H]
% \centering
% \caption{Natural gradient cumulation}
\begin{algorithmic}[1]
\STATE $g_i \gets \dfrac{|z_{s,i}| - \delta_i}{\max(\delta_i\sqrt{1+\delta_i^2},\,1)}$
\STATE $\phi_i \gets (1-c)\phi_i + \sqrt{c(2-c)}\, g_i$
\end{algorithmic}
% \end{algorithm}
\noindent
% \textbf{Parameter update.}
Parameters follow the accumulated natural gradient:
% \begin{algorithm}[H]
% \centering
% \caption{\textsc{INES}: parameter update}
\begin{algorithmic}[1]
\STATE $\delta_i \gets \delta_i\,\exp(\eta\, \phi_i)$
\end{algorithmic}
% \end{algorithm}
%
%\noindent The procedure then repeats from the sampling step.
%
\end{mdframed}

\begin{remark}[Direct step-size parametrization]\label{rem:reparam}
A direct update in the natural parameter \(\theta_i = \log q_i\) is problematic, as the domain constraint \(q_i \in (0,1)\) implies \(\theta_i < 0\). Additive updates in \(\theta\) do not preserve this constraint and may lead to invalid parameters (\(q_i \geq 1\)), at which point the DG moments diverge. To avoid explicit clipping, we instead parameterize the mutation strength directly via the expected absolute step size \(\delta_i\) and perform updates in this space. Since \(\delta_i > 0\) by definition, the update rule preserves positivity and thus always induces a valid DG parameter.
\end{remark}

\begin{remark}[Path normalization]\label{rem:path-norm}
The factor $\sqrt{c(2-c)}$ keeps the variance of the cumulative path $\boldsymbol{\phi}$ independent of the time constant $c$. Without this scaling, longer memory (smaller $c$) would shrink $\boldsymbol{\phi}$, effectively reducing the learning rate. The normalization, therefore, decouples the adaptation time scale from the update magnitude.
\end{remark}

\begin{remark}[Practical normalization]\label{rem:var-normalizer}
The normalization of $g_i$ corresponds to scaling by the Fisher information of the DG family, yielding a natural gradient update~\cite[Section~2.2]{Ollivier2018}. This removes the dependence of the update magnitude on the local curvature
of the model. In practice, when the distribution becomes highly concentrated, the variance, i.e. $\delta_i\sqrt{1+\delta_i^2}$, becomes small, leading to excessively large updates. We therefore apply a lower bound via $\max(\cdot,1)$, which acts as a Tikhonov-style regularization and prevents numerical instability.
\end{remark}

\subsubsection*{$\ell_1$-Nativeness and Geometric Separation}
It is instructive to distinguish between the two geometric layers underlying the proposed algorithm.
\paragraph{Data-space geometry.}
Mutation magnitudes on the integer lattice $\Integer^n$ are measured entirely in terms of absolute deviations. The resulting adaptation signal is based on the centered sufficient statistic $|z_i| - \E[|Z_i|]$, making the method fully $\ell_1$-native. The first moment $\delta_i = \E[|Z_i|]$ therefore acts as the natural scale parameter, and neither squared steps nor
Euclidean norms enter the search dynamics.
\paragraph{Statistical-manifold geometry.}
While the search itself is $\ell_1$-based, the adaptation signal is preconditioned by the Fisher information of the DG family. This introduces a second moment that reflects the local curvature of the statistical model and yields a natural gradient update. Crucially, this second moment acts only on the scalar quantity $|Z_i|$ and calibrates parameter updates; it does not affect how distances are measured in $\Integer^n$.

\section{Numerical Observations}\label{sec:results}
To empirically validate our proposed approach, we conduct a series of numerical procedures to calibrate and test it\footnote{Due to space limitations, the complete experiments are in our Supp. Material~\cite{zenodo}}.
We consider axis-aligned convex benchmark models induced by weighted quadratic forms on the integer domain. For each independent instance, the optimum $\bx^\star$ is sampled uniformly from $[-50,50]^n$, and all algorithms are initialized from the same starting point, sampled from the same domain. The objective is
\begin{equation}
    f(\bx) = \|\mathbf{W}(\bx-\bx^\star)\|_2,
\end{equation}
where $\mathbf{W}=\mathrm{diag}(w_1,\ldots,w_n)$ controls the conditioning. We study four benchmark classes:
\[
\begin{array}{ll}
\text{Sphere}:  & w_i = 1 \quad \forall i,\\
\text{Ellipse}: & w_i = {10^4}^{(i-1)/(n-1)},\\
\text{Discus}:  & w_1=10^4,\; w_i=1 \text{ for } i>1,\\
\text{Cigar}:   & w_1=1,\; w_i=10^4 \text{ for } i>1.
\end{array}
\]

%\footnote{\td{Zenodo repo?}}
%%
\paragraph{Parameter Calibration}
We calibrated our proposed \textsc{INES} using a \textit{grid search} over the Sphere and Ellipsoid functions across multiple problem dimensionalities. Based on this, we fit a simple scaling law to the optimal tradeoff points,  resulting in parameter settings $c(n) = 1 - \frac{1.5}{n}$ and $\eta(n) = \left(\frac{2}{n}\right)^{1 / 3}$, used throughout our experiments.
%The parameters of our proposed algorithm are calibrated by using a grid search on the Sphere and Ellipsoid functions across multiple problem dimensionalities. Based on this, we fit a simple scaling law to the optimal tradeoff points, resulting in the following parameter settings, which we will use throughout our experiments: 
% \[
% $c(n) = 1 - \frac{1.5},
% \qquad
% \eta(n) = \left(\frac{2}{n}\right)^{1 / 3}.
% \]
%%
\subsection{Analysis of Learned Step Sizes}
\begin{figure}[t]
    \centering
    \includegraphics[width=\linewidth]{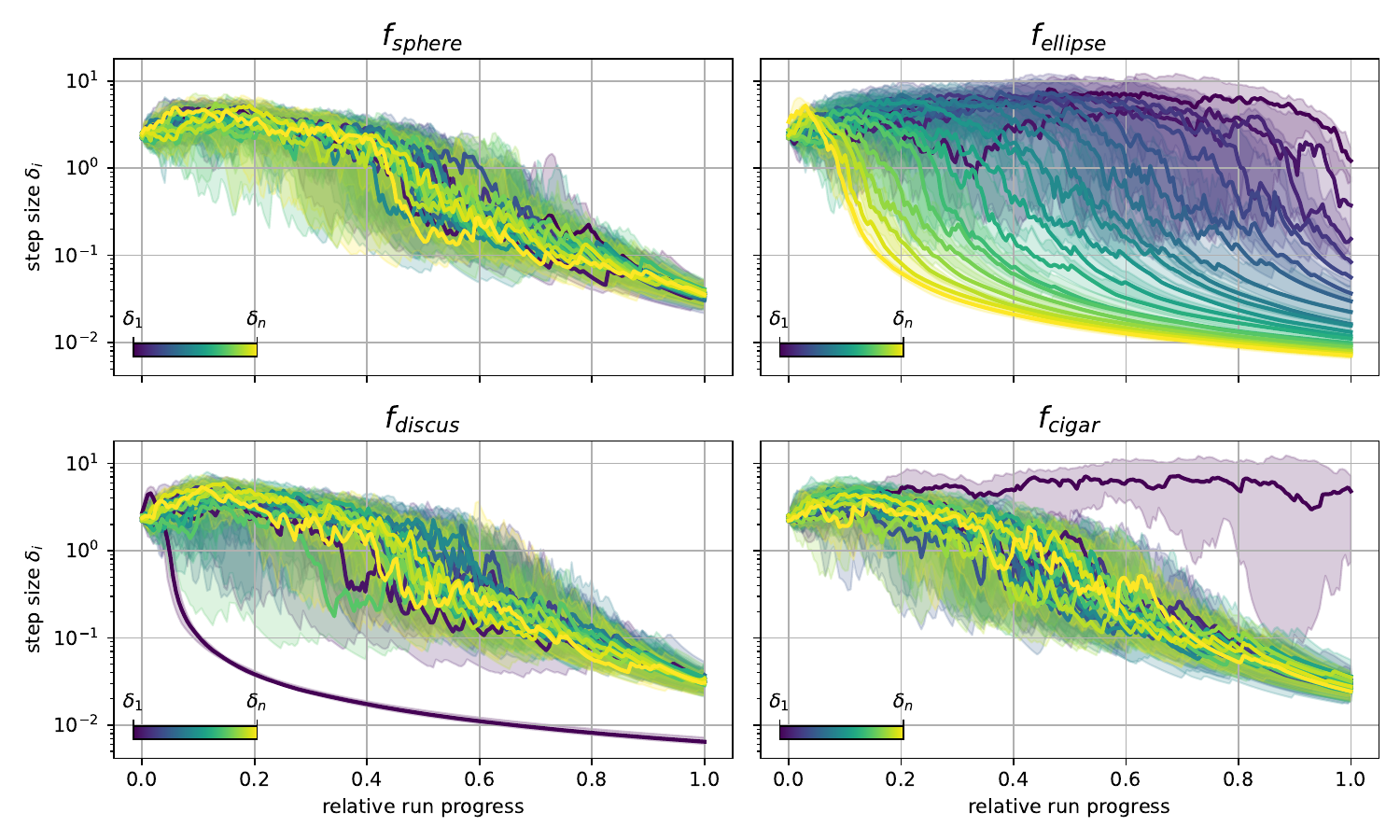}
    \caption{Evolution of coordinate-wise step-sizes $\delta_i$ over relative run progress for $n=20$. Solid lines show the median over $25$ runs, with shaded areas indicating the interquartile range. Colors correspond to coordinate indices.}
    \label{fig:vlearning10}
\end{figure}
We first focus on how the proposed method adapts coordinate-wise step-sizes. For each function and dimension $n \in \{2,3,5,10,20,40,100\}$, we perform $25$ independent runs, each terminated upon reaching the optimum. We analyze the behavior of the raw step-sizes $\delta_i(t)$ directly. To compare runs of different lengths, trajectories are resampled to a common relative time scale $\tau \in [0,1]$. %Results are aggregated using the median and interquartile range.

\paragraph{Dynamics of step-sizes.}
Figure~\ref{fig:vlearning10} shows the evolution of step-sizes for $n=20$. On the Sphere function, all coordinates evolve similarly over time, reflecting the absence of directional structure. On the Ellipse, a clear separation emerges, with step-sizes adapting to the underlying conditioning. On the Discus and Cigar functions, the method successfully identifies the distinguished direction: the corresponding coordinate exhibits a markedly different scale (suppressed for Discus, amplified for Cigar), while the remaining coordinates evolve in a largely symmetric manner. Despite the stochasticity of the updates, the qualitative behavior is consistent across runs and dimensionalities (see \cite{zenodo}) and altogether validates our \textsc{INES} design.

As an additional qualitative check, we inspect the learned step-sizes on the pseudo-Boolean benchmarks \textsc{OneMax} and \textsc{LeadingOnes}. To apply DG mutations on the binary domain, integer mutations are cyclically mapped to $\{0,1\}^n$ by $(\bx+\bz)\bmod 2$. These experiments are not intended as a full PBO benchmark study, but illustrate (see Figure~\ref{fig:vlearningpbo}) how the same adaptation mechanism behaves on problems with different coordinate structures. On \textsc{OneMax}, the learned step-sizes remain comparatively uniform, whereas on \textsc{LeadingOnes} a pronounced coordinate-dependent pattern emerges.

\begin{figure}[t]
    \centering
    \includegraphics[width=\linewidth]{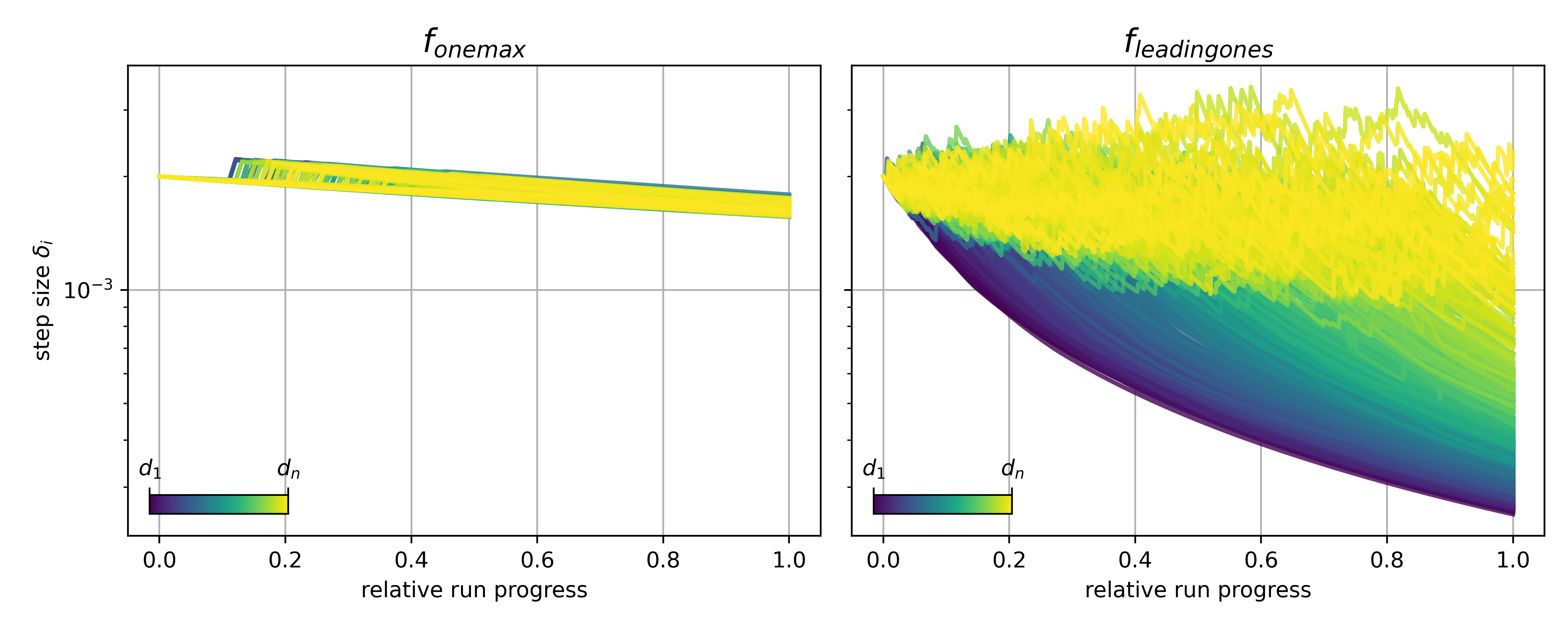}
    \caption{Evolution of coordinate-wise step-sizes $\delta_i$ on \textsc{OneMax} and \textsc{LeadingOnes} for $n=500$. Lines show the median over 25 runs; colors indicate coordinate index. The initial step-size is $\delta_i=1/n$. }
    \label{fig:vlearningpbo}
\end{figure}

\subsection{Performance on Integer Quadratic Models}
After confirming that the step size matches expected behavior, we move on to benchmark optimization performance. 
The experimental protocol follows the same setup as the previous experiment. 
As reference methods, we consider two integer-handling variants of CMA-ES. The first is \textsc{CMA-IH}~\cite{marty2024lb}, which applies TN-based mutations over the integer lattice while adapting the CMA-ES mechanism to this discrete domain. The second is its separable variant, denoted \textsc{CMA-IH-sep}, which restricts the covariance matrix to its diagonal elements~\cite{ros2008simple}. This separable baseline is particularly relevant here because the current \textsc{INES} variant also adapts only coordinate-wise step-sizes. We use the default population size for each dimensionality as recommended in the CMA-ES literature, without additional parameter tuning, and set the initial step size to $\sigma_0 = 100/n$. This choice provides a common initialization across dimensions, but the sensitivity of the CMA-based baselines to $\sigma_0$ is not studied here and remains a limitation of the empirical comparison. To obtain an initial variance of $\sigma_0^2$ for the DG distribution, we set $\delta_0 = \sigma_0^2 / {(2 \sqrt{\sigma_0^2 + 1})}$ for \textsc{INES}. \jacob{this should be: $\delta_0 = \sigma_0^2 / \sqrt{2\sigma_0^2+1}$.}

\begin{table}[t]
\centering
\caption{Expected running time (ERT) on the four benchmark functions over 25 independent runs. Lower is better; $\infty$ indicates that the target was not reached within the budget of $10^4 \cdot n$ evaluations in any run. Best values per function--dimension pair are shown in boldface. }

\label{tab:ert}

\setlength{\tabcolsep}{5pt}        % horizontal spacing
\renewcommand{\arraystretch}{1.2}  % vertical spacing

\begin{tabular}{llrrrrrrr}
\toprule
Function & Method & 2 & 3 & 5 & 10 & 20 & 40 & 100 \\
\midrule

Sphere
& \textsc{INES}   & 231 & 329 & 511 & 1,238 & 2,865 & 7,358 & \textbf{20\,904} \\
& \textsc{CMA-IH} & 115 & 217 & 423 & 933 & 2,200 & \textbf{4\,342} & $\infty$ \\
& \textsc{CMA-IH-sep} & \textbf{108} & \textbf{181} & \textbf{345} & \textbf{744} & \textbf{1\,736} & 4\,343 & 88\,395 \\ \hline

Ellipse
& \textsc{INES}   & 266 & 437 & \textbf{626} & \textbf{1\,288} & \textbf{3\,117} & \textbf{8\,402} & \textbf{29\,488} \\
& \textsc{CMA-IH} & \textbf{152} & 588 & 985 & 3\,684 & 13\,965 & 49\,498 & $\infty$ \\
& \textsc{CMA-IH-sep} & 171 & \textbf{344} & 806 & 2\,070 & 5\,246 & 16\,024 & $\infty$ \\ \hline

Discus
& \textsc{INES}   & 287 & 461 & 668 & 1\,389 & 3\,140 & 7\,927 & \textbf{23\,244} \\
& \textsc{CMA-IH} & 173 & 297 & 647 & 1\,547 & 3\,712 & 8\,232 & $\infty$ \\
& \textsc{CMA-IH-sep} & \textbf{172} & \textbf{292} & \textbf{548} & \textbf{1\,118} & \textbf{2\,496} & \textbf{4\,635} & 24\,075\,579 \\ \hline

Cigar
& \textsc{INES}   & 260 & 331 & 518 & 1\,111 & 2\,821 & 7\,184 & 22\,717 \\
& \textsc{CMA-IH} & \textbf{156} & 277 & 679 & 1\,737 & 5\,013 & 21\,351 & 667\,002 \\
& \textsc{CMA-IH-sep} & 164 & \textbf{231} & \textbf{449} & \textbf{919} & \textbf{2\,076} & \textbf{4\,193} & \textbf{13\,116} \\ \hline

\bottomrule
\end{tabular}
\end{table}

\paragraph{Results} Table~\ref{tab:ert} reports the expected running time (ERT) across
dimensionalities~\cite{hansen2012real}. All finite ERT values are based on 25 successful runs, except for \textsc{CMA-IH-sep} on Discus at $n=100$ (one success) and \textsc{CMA-IH} on Cigar at $n=100$ (22 successes); these ERTs may exceed the per-run budget because failed runs contribute their full budget. On the Sphere function, the CMA-based baselines are faster in low and moderate dimensions, but at $n=100$ \textsc{CMA-IH} fails to reach the target, and \textsc{CMA-IH-sep} requires substantially more evaluations than \textsc{INES}. \textsc{INES} remains robust in high dimensions, but the lack of a global step-size component hinders performance on the isotropic Sphere model. \textsc{INES} shows its clearest advantage on the Ellipse function, where it obtains the best ERT from $n=5$ onward and is the only method that reaches the target at $n=100$. In fact, even for a very large $n = 1000$ (not included in the Table), \textsc{INES} consistently finds the global optimum for the Ellipse model with an ERT of 399\,483. This supports the step-size analysis above, indicating that the coordinate-wise DG adaptation captures the dominant conditioning structure. On Discus and Cigar, \textsc{CMA-IH-sep} is faster on most lower- and moderate-dimensional settings, especially on Cigar, but also deteriorates strongly on Discus at $n=100$. Overall, these results show that \textsc{INES} is an effective lattice-native solver with robust high-dimensional behavior.

\section{Conclusion}\label{sec:conclusion}

We introduced the $(1,\lambda)$ Integer Natural Evolution Strategy (INES), a fully $\ell_1$-native approach to step-size adaptation on the integer lattice. The method combines double geometric (DG) mutations with a natural-gradient update derived from the exponential-family structure of the DG distribution. In this setting, the sufficient statistic $\phi(z)=|z|$ yields the adaptation signal $|z_i|-\E[|Z_i|]$, whose Fisher-scaled accumulation provides an online estimator of the natural gradient. For a practical algorithm, we use a fading-memory evolution path and directly parametrize the expected absolute step size $\delta_i=\E[|Z_i|]$, thereby avoiding validity issues arising from unconstrained updates of the natural parameter $\theta$. This yields a simple, numerically stable adaptation mechanism consistent with the geometry of $\mathbb{Z}^n$. 

Empirically, \textsc{INES} performs competitively on integer quadratic benchmarks and learns coordinate-wise step-sizes that reflect the underlying conditioning. Additional qualitative experiments on pseudo-Boolean benchmarks suggest that the same mechanism induces problem-dependent step-size dynamics on binary domains. Its strongest results appear for the Ellipse function and in several high-dimensional settings where the CMA-based baselines fail or perform poorly. At the same time, the current formulation is limited to coordinate-wise adaptation and lacks both a global step-size component and dependencies between variables, which likely explains its weaker performance on simple isotropic landscapes.

Overall, DG-based step-size adaptation is not an ad hoc discrete analog of continuous ES mechanisms, but follows directly from the statistical structure of the underlying mutation distribution. Future work includes convergence-rate analysis and extensions to multi-parent recombination. On the algorithmic side, the natural next steps are to integrate a global step-size component and to model dependencies between variables. Empirically, broader evaluations should include rotated, non-separable, and multimodal benchmarks, as well as comparisons with recent mixed-integer optimizers such as CMA-ES with Margin~\cite{hamano2022cma} and DX-NES-ICI~\cite{ikeda2023natural}.

\begin{credits}
\subsubsection{\ackname}
Diederick Vermetten acknowledges funding by the European Union (ERC, ``dynaBBO'', grant no.~101125586).
\subsubsection{\discintname}
 The authors have no competing interests to declare that are relevant to the content of this article.

\end{credits}

\bibliographystyle{splncs04}
\bibliography{main_clean}

@inproceedings{Akimoto2010,
  author    = {Youhei Akimoto and Yuichi Nagata and Isao Ono and Shigenobu Kobayashi},
  title     = {Bidirectional Relation between {CMA} Evolution Strategies and Natural Evolution Strategies},
  booktitle = {PPSN XI},
  year      = {2010},
  pages     = {154--163},
  publisher = {Springer Berlin Heidelberg}
}

@inproceedings{ros2008simple,
  title={A simple modification in CMA-ES achieving linear time and space complexity},
  author={Ros, Raymond and Hansen, Nikolaus},
  booktitle={International conference on parallel problem solving from nature},
  pages={296--305},
  year={2008},
  organization={Springer}
}

@inproceedings{Hamano2025cat,
author = {Hamano, Ryoki and Saito, Shota and Nomura, Masahiro and Uchida, Kento and Shirakawa, Shinichi},
title = {CatCMA: Stochastic Optimization for Mixed-Category Problems},
year = {2024},
isbn = {9798400704949},
publisher = {Association for Computing Machinery},
address = {New York, NY, USA},
doi = {10.1145/3638529.3654198},
pages = {656–664},
numpages = {9},
location = {Melbourne, VIC, Australia},
series = {GECCO '24},
booktitle = {Proceedings of the Genetic and Evolutionary Computation Conference},
}

@article{Amari1998,
  author  = {Amari, Shun-Ichi},
  title   = {Natural Gradient Works Efficiently in Learning},
  journal = {Neural Comput.},
  volume  = {10},
  number  = {2},
  pages   = {251--276},
  year    = {1998}
}

@book{BarndorffNielsen2014,
  author    = {Ole Barndorff-Nielsen},
  title     = {Information and Exponential Families In Statistical Theory},
  publisher = {John Wiley and Sons},
  year      = {2014}
}

@article{Beyer-Schwefel,
  author  = {Hans-Georg Beyer and Hans-Paul Schwefel},
  title   = {Evolution Strategies - A Comprehensive Introduction},
  journal = {Natural Computing},
  volume  = {1},
  number  = {1},
  pages   = {3--52},
  year    = {2002}
}

@misc{zenodo,
  author    = {De Nobel, Jacob and Vermetten, Diederick and Wang, Hao and Shir, Ofer M. and Emmerich, Michael and B{\"a}ck, Thomas},
  title     = {Integer Natural Evolution Strategies - Reproducibility and Additional Results},
  publisher = {Zenodo},
  year      = {2026},
  doi          = {10.5281/zenodo.19627585},
  OPTurl          = {https://doi.org/10.5281/zenodo.19627585},
}

@inproceedings{hamano2022cma,
author = {Hamano, Ryoki and Saito, Shota and Nomura, Masahiro and Shirakawa, Shinichi},
title = {{CMA-ES} with Margin: Lower-Bounding Marginal Probability for Mixed-Integer Black-Box Optimization},
booktitle = {Proceedings of the Genetic and Evolutionary Computation Conference},
series = {GECCO '22},
pages = {639--647},
year = {2022},
publisher = {Association for Computing Machinery},
address = {New York, NY, USA},
doi = {10.1145/3512290.3528827}
}

@inproceedings{ikeda2023natural,
author = {Ikeda, Koki and Ono, Isao},
title = {Natural Evolution Strategy for Mixed-Integer Black-Box Optimization},
booktitle = {Proceedings of the Genetic and Evolutionary Computation Conference},
series = {GECCO '23},
pages = {831--838},
year = {2023},
publisher = {Association for Computing Machinery},
address = {New York, NY, USA},
doi = {10.1145/3583131.3590518}
}

@article{hansen2012real,
  author  = {Hansen, Nikolaus and Auger, Anne and Finck, Steffen and Ros, Raymond},
  title   = {Real-parameter black-box optimization benchmarking: Experimental setup},
  journal = {Tech. Rep.},
  year    = {2012}
}

@article{hansen2001completely,
  author  = {Hansen, Nikolaus and Ostermeier, Andreas},
  title   = {Completely Derandomized Self-Adaptation in Evolution Strategies},
  journal = {Evol. Comput.},
  volume  = {9},
  number  = {2},
  pages   = {159--195},
  year    = {2001}
}

@article{Inusah2006,
  author  = {Seidu Inusah and Tomasz J. Kozubowski},
  title   = {A Discrete Analogue of the Laplace Distribution},
  journal = {Journal of Statistical Planning and Inference},
  volume  = {136},
  number  = {3},
  pages   = {1090--1102},
  year    = {2006}
}

@inproceedings{Malago2011,
  author    = {Malag\`{o}, Luigi and Matteucci, Matteo and Pistone, Giovanni},
  title     = {Towards the Geometry of Estimation of Distribution Algorithms Based on the Exponential Family},
  booktitle = {FOGA 11},
  pages     = {230--242},
  publisher = {Association for Computing Machinery},
  year      = {2011}
}

@inproceedings{marty2024lb,
  author    = {Marty, Tristan and Hansen, Nikolaus and Auger, Anne and Semet, Yann and H\'{e}ron, S\'{e}bastien},
  title     = {LB+IC-CMA-ES: Two Simple Modifications of CMA-ES to Handle Mixed-Integer Problems},
  booktitle = {PPSN XVIII},
  pages     = {284--299},
  publisher = {Springer-Verlag},
  year      = {2024}
}

@book{Murphy2012ML,
  author    = {Murphy, K.P.},
  title     = {Machine Learning: A Probabilistic Perspective},
  series    = {Adaptive Computation and Machine Learning series},
  publisher = {MIT Press},
  year      = {2012}
}

@article{Ollivier2018,
  author  = {Yann Ollivier},
  title   = {Online Natural Gradient as a Kalman Filter},
  journal = {Electronic Journal of Statistics},
  volume  = {12},
  number  = {2},
  pages   = {2930--2961},
  year    = {2018}
}

@inproceedings{rudolph1994,
  author    = {Rudolph, G{\"u}nter},
  editor    = {Davidor, Yuval and Schwefel, Hans-Paul and M{\"a}nner, Reinhard},
  title     = {An Evolutionary Algorithm for Integer Programming},
  booktitle = {PPSN III},
  year      = {1994},
  pages     = {139--148},
  publisher = {Springer Berlin Heidelberg}
}

@inbook{RudolphHandbookNACO,
  author    = {Rudolph, G{\"u}nter},
  title     = {Handbook of Natural Computing: Theory, Experiments, and Applications},
  chapter   = {Evolutionary Strategies},
  pages     = {673--698},
  publisher = {Springer-Verlag},
  year      = {2012}
}

@inproceedings{shir2025,
  author    = {Shir, Ofer M. and Emmerich, Michael},
  title     = {Foundations of Correlated Mutations for Integer Programming},
  booktitle = {FOGA 18},
  pages     = {285--296},
  publisher = {Association for Computing Machinery},
  series    = {FOGA'25},
  year      = {2025}
}

@article{Talbi2024,
  author  = {El-Ghazali Talbi},
  title   = {Metaheuristics for Variable-Size Mixed Optimization Problems: A Unified Taxonomy and Survey},
  journal = {Swarm and Evolutionary Computation},
  volume  = {89},
  pages   = {101642},
  year    = {2024}
}

@article{Wierstra2014NES,
  author  = {Wierstra, Daan and Schaul, Tom and Glasmachers, Tobias and Sun, Yi and Peters, Jan and Schmidhuber, J\"{u}rgen},
  title   = {Natural Evolution Strategies},
  journal = {J. Mach. Learn. Res.},
  volume  = {15},
  number  = {1},
  pages   = {949--980},
  year    = {2014}
}

\end{document}